\newtheorem{theorem}{Theorem}
\newtheorem{lemma}{Lemma}
\DeclareMathOperator{\sign}{sgn}
\newcommand{\T}{\mathsf{T}}
\newcommand{\matr}[1]{\bm{#1}}
\newcommand{\mK}{\matr{K}}
\newcommand{\mX}{\matr{X}}
\newcommand{\mOmega}{\matr{\Omega}}
\newcommand{\mPmOmega}{{\matr{P}_{\matr{\Omega}}}}
\newcommand{\mRmOmega}{{\matr{R}_{\matr{\Omega}}}}
\newcommand{\mQmOmega}{{\matr{Q}_{\matr{\Omega}}}}
\newcommand{\mI}{\matr{I}}
\newcommand{\mZero}{\matr{0}}
\renewcommand{\vec}[1]{\bm{#1}}
\newcommand{\vx}{\vec{x}}
\newcommand{\vw}{\vec{w}}
\newcommand{\vxd}{\vw}
\newcommand{\vy}{\vec{y}}
\newcommand{\vomega}{\vec{\omega}}
\newcommand{\bigO}{\mathcal{O}}
\newcommand{\DDDD}{\ensuremath{\mathrm{D}_4}}
\title{Decision-Directed Data Decomposition}
\author[1]{Brent D. Davis*} \author[2,3]{Ethan C. Jackson} \author[1,3,4]{Daniel J. Lizotte}
\affil[1]{Department of Computer Science, Western University, London, ON~~N6A 5B7, Canada}
\affil[2]{School of Engineering, University of Guelph, Guelph, ON~~N1G 2W1, Canada}
\affil[3]{Vector Institute, Toronto, ON~~M5G 1M1, Canada}
\affil[4]{Department of Biostatistics \& Epidemiology, Schulich School of Medicine and Dentistry, Western University, London, ON N6A 5C1, Canada}
\begin{document}

\maketitle

\begin{abstract}
We present an algorithm, Decision-Directed Data Decomposition (\DDDD{}), which decomposes a dataset into two components. The first contains most of the useful information for a specified supervised learning task. The second orthogonal component contains little information about the task but retains associations and information that were not targeted. The algorithm is simple and scalable. We illustrate its application in image and text processing domains. Our results show that 1) post-hoc application of \DDDD{} to an image representation space can remove information about specified concepts without impacting other concepts, 2) \DDDD{} is able to improve predictive generalization in certain settings, and 3) applying \DDDD{} to word embedding representations produces state-of-the-art results in debiasing.
\end{abstract}

\section{Introduction}

Distributed feature representations of complex entities learned from data are useful for many tasks. For example, image representations from deep learning models have found many uses outside of the task they were originally trained on \citep{Gatys}, and word embeddings are used for supervised learning tasks, data exploration, and sense-making tasks in a variety of domains \citep{Dai2017}. However, such representations can carry information that is undesirable, either because it reflects undesirable bias (e.g.\ gender bias in word embeddings \citep{Caliskan2017}) or because it obfuscates other information that is relevant to the task at hand \citep{Goodfellow}, which can impact both data exploration and ability to generalize.

We introduce Data-Directed Data Decomposition (\DDDD{}), a technique to decompose a data matrix into two components. One component contains information about a specified classification or regression target that a linear model can use for prediction, while the other does not contain such information. It is this second component, orthogonal to the first, that is useful for further analyses. We will demonstrate that it excludes information about the specified supervised learning target, and that information about unrelated supervised learning targets is not affected.

Our main goal is to enable post-hoc removal of target concepts from data, which is useful when debiasing data. Debiasing is worthy of particular attention as bias can lead to unfair models and is notoriously difficult to dis-entrench \citep{Gonen2019}. However, we also present results that illuminate when \DDDD{} can improve generalization performance. Hence we focus our attention on \DDDD{}'s ability to remove information and bias, but we also consider applications that would normally be served by adversarial learning, where it is important to remove the ability to learn certain concepts from a representation with the goal of improving generalization \citep{Ribeiro2016}.

We describe our algorithm in detail in Section~\ref{ss:DDDD}. We identify uses of \DDDD{} and provide illustrative experimental examples in Section~\ref{ss:examples}, including state-of-the-art results on word embedding debiasing. In Section~\ref{ss:related} we discuss connections to related methodology and formalize additional properties of $\DDDD{}$ to describe the connections. Finally in Section~\ref{ss:conclusion} we conclude and identify future directions of research.

\section{Decision-Directed Data Decomposition}\label{ss:DDDD}

Our approach uses generalized linear supervised learning methods whose decision functions are of the form $h(\vx) = g(\vx^\T \vw)$, where $\vw$ is learned from labelled data and represents a direction in feature space that is most useful for predicting a target $\vy$, according to the loss function of the learner. (E.g.\ cross-entropy for logistic regression, hinge for SVM.) \DDDD{} will find these most useful directions and then project the data onto their orthogonal complement to create a new dataset with which we are \textit{not} able to predict the target well. The resulting data can then be used subsequently for analyses where learners should \textit{not} make use of the target concepts---whether explicitly or implicitly---in order to label future instances. \DDDD{} is presented in  Algorithm~\ref{alg:primal}; we present the relevant background and intuition here.

For a $p \times 1$ unit vector $\vomega$, the projection of the rows of a matrix $\mX$ onto $\vomega$ is given by $\mX_\parallel = \mX\vomega\vomega^\T$, and the projection onto its orthogonal complement is given by $\mX_\perp = \mX(\mI - \vomega\vomega^\T)$. 
\\For example, if  
$\mX = \left[\begin{array}{ccc}
1 & 0 & 1\\
0 & 1 & 1\\
1 & 0 & 0\\
0 & 1 & 0\\
\end{array}\right]$ and $\vomega = \left[ \begin{array}{c} 0 \\ 0 \\ 1 \end{array} \right]$, then
$\mX_\parallel = \left[\begin{array}{ccc}
0 & 0 & 1\\
0 & 0 & 1\\
0 & 0 & 0\\
0 & 0 & 0\\
\end{array}\right]$ and $\mX_\perp = \left[\begin{array}{ccc}
1 & 0 & 0\\
0 & 1 & 0\\
1 & 0 & 0\\
0 & 1 & 0\\
\end{array}\right]$.

Note $\mX = \mX_\parallel + \mX_\perp$, $\mX_\parallel \mX_\perp^\T = \mX_\perp \mX_\parallel^\T = \mZero$, and $\mX_\perp\vomega = \mZero$; hence if we consider the rows of $\mX_\perp$ as points in space, they have zero variability in the direction of $\vomega$; in other words, all information about where the points lie in the direction of $\vomega$ has been removed and therefore $\mX_\perp$ could be used in future analyses where that direction should be excluded from decision-making. 

In practice, it is unlikely that in a distributed representation only one direction contains information about a given target. Hence, we take the $\mX_\perp$ resulting from the first projection and remove the next best decision-direction, resulting in a new $\mX_\perp$, and so on. Continuing this process eventually gives $\mX_\parallel = \mX$ and $\mX_\perp = \mZero$, which obviously contains no information about the target (or about anything else). At any step along the way, we have removed some of the information about $\vy$ from $\mX$ that can be recovered by (generalized) linear learners, and in practice the quantity that remains can be reduced to zero. 


The complete \DDDD{} algorithm works as follows. 
Let $\mX$ be an $n \times p$ matrix of feature vectors, each of length $p$, and let $\vy$ be an $n \times 1$ vector of supervised learning targets. Let $\vxd$ be a $p \times 1$ decision vector learned from $\mX$ and $\vy$, and let $\vomega = \vxd/||\vxd||$. The projection of the rows of $\mX$ onto the space orthogonal to $\vomega$ is given by $\mX_\perp = \mX(\mI  - \vomega\vomega^\T).$ For all feature vectors $\vx_{\perp i}$ $i \in 1..n$ which correspond to the rows of $\mX_\perp$, we have $\vx_{\perp i}^\T \vomega = 0$. We note the following simplification of sequential orthogonal projections.

\begin{lemma}[Sequences of orthogonal projections]\label{lm:orthogonal-primal}
If for all $\vomega^{(i)}$, $\vomega^{(j)}$ in $\vomega^{(1)}, \vomega^{(2)}, ..., \vomega^{(p)}$ we have $\vomega^{(i)\T} \vomega^{(j)} = 0$, then $\mX \prod_i (\mI  - \vomega^{(i)}\vomega^{(i)\T}) = \mX (\mI  - \sum_i \vomega^{(i)}\vomega^{(i)\T}) $.
\end{lemma}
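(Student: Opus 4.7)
The plan is to prove the matrix identity $\prod_i (\mI - \vomega^{(i)}\vomega^{(i)\T}) = \mI - \sum_i \vomega^{(i)}\vomega^{(i)\T}$ directly (so that left-multiplication by $\mX$ yields the claim), using induction on the number of factors. The key algebraic observation is that whenever $i \neq j$, the cross term in the expansion of $(\mI - \vomega^{(i)}\vomega^{(i)\T})(\mI - \vomega^{(j)}\vomega^{(j)\T})$ is
$$ \vomega^{(i)}\vomega^{(i)\T}\vomega^{(j)}\vomega^{(j)\T} = \vomega^{(i)}\bigl(\vomega^{(i)\T}\vomega^{(j)}\bigr)\vomega^{(j)\T} = \mZero $$
by the orthogonality hypothesis. (I read the hypothesis as applying for $i \neq j$, since the case $i = j$ would force every $\vomega^{(i)} = \vzero$ and render the statement trivial.)

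First I would establish the base case: for a single factor, $\mI - \vomega^{(1)}\vomega^{(1)\T}$ is exactly $\mI - \sum_{i=1}^{1}\vomega^{(i)}\vomega^{(i)\T}$, so the identity holds. Next, assuming the identity for $k$ factors, I would compute
$$ \Bigl(\mI - \sum_{i=1}^{k} \vomega^{(i)}\vomega^{(i)\T}\Bigr)\bigl(\mI - \vomega^{(k+1)}\vomega^{(k+1)\T}\bigr), $$
distribute the product, and observe that the only surviving cross terms vanish because $\vomega^{(i)\T}\vomega^{(k+1)} = 0$ for every $i \le k$. What remains is $\mI - \sum_{i=1}^{k+1}\vomega^{(i)}\vomega^{(i)\T}$, completing the induction. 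Left-multiplying by $\mX$ gives the stated equality.

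There isn't really a hard step here; the only thing to be careful about is the scope of the orthogonality hypothesis (it is used only for $i \neq j$, and the diagonal terms $\vomega^{(i)}\vomega^{(i)\T}$ are not required to be idempotent projections — the lemma is purely an algebraic cancellation and does not need the $\vomega^{(i)}$ to be unit vectors). Since the statement is about right multiplication, I would not even need to invoke commutativity of the individual factors, though it does follow as a corollary. The induction is the cleanest route; a direct expansion of $\prod_i(\mI - \vomega^{(i)}\vomega^{(i)\T})$ into a sum over subsets of indices would work equally well, since every term indexed by a subset of size $\ge 2$ contains a factor $\vomega^{(i)\T}\vomega^{(j)} = 0$ and therefore vanishes, leaving only $\mI$ and the singleton terms $-\vomega^{(i)}\vomega^{(i)\T}$.
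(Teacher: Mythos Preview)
Your argument is correct. The paper actually states this lemma without proof, so there is no authorial argument to compare against; your induction (or the equivalent subset-expansion you sketch) is exactly the standard way to justify it, and your remark that only the off-diagonal orthogonality $\vomega^{(i)\T}\vomega^{(j)}=0$ for $i\neq j$ is needed is accurate.
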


Using this lemma, we define $\mOmega^{(i)} \leftarrow \mI - \sum_{j = 1}^{i} \vomega^{(j)}\vomega^{(j)\T}$, which is the projection onto the space orthogonal to all of $\vomega^{(1)}$ through $\vomega^{(i)}$. This allows us to define $\mX^{(i)}_\perp = \mX \mOmega^{(i)}$ and $\mX^{(i)}_\parallel = \mX - \mX^{(i)}_\perp$. Our learner can then use $\mX^{(i)}_\perp$ and $\vy$ to identify the next direction to remove, and so on.

The rank of $\mOmega^{(i)}$ is $p - i$, and the rank of $\mX^{(i)}_\perp$ is also $p - i$ assuming $\mX$ had full rank to begin with. If the learning algorithm to be used with \DDDD{} requires a full-rank feature matrix, we can use Gram-Schmidt orthogonalization to produce an equivalent full-rank representation as follows. Create a matrix $\matr{G} = \left[ \vomega^{(1)},...,\vomega^{(i)} | \vec{\psi}^{(i+1)}, ..., \vec{\psi}^{p} \right]$, choosing the $\vec{\psi}$ so that $G$ has full rank. Perform (possibly modified) Gram-Schmidt orthogonalization
on $\matr{G}$. Since the $\vomega^{(1)},...,\vomega^{(i)}$ are already orthonormal they will be unchanged, and the remaining columns will form an orthonormal basis for their orthogonal complement; call those columns $\mPmOmega$. The learner can then use $\mX\mPmOmega$ and $\vy$ to learn the next direction $\tilde\vxd$ in $p - i$ dimensions, and then project that weight vector up to $p$ dimensions to obtain $\vw = \mPmOmega^\T \tilde{\vxd}$. Even if the learner does not require a full-rank input, the orthogonalization process may be desirable for numerical stability with large $p$.

If a full-rank feature matrix is not needed, then the time cost per iteration is $\bigO(p^2)$ to form $\mOmega$ (if it is updated in-place from the previous iteration) and $\bigO (n p^2)$ to project $X$, plus the cost of learning. If a full-rank feature matrix is needed, then the time cost per iteration is $\bigO(p^3)$ to form $\mPmOmega$ by Gram-Schmidt or QR and $\bigO(n (p-i)p)$ to project $X$, plus the cost of learning. In both cases, the space complexity (additional to storage of $\mX$ and $\vy$) is $O(p^2)$ to store the $\vomega^{(i)}$.

\begin{algorithm}
 \DontPrintSemicolon
 \KwData{Feature matrix $\mX$ ($n \times p$) of training points, targets $\vy$  ($n \times 1$).}
 \KwResult{Orthogonal basis vectors $\vomega^{(1)}, \vomega^{(2)}, ..., \vomega^{(p)}$}
 \For{$i$ from $1$ to $p$}{
    \eIf{learner does not require a full-rank feature matrix}{ 
        $\mOmega \leftarrow \mI - \sum_{j = 1}^{i - 1} \vomega^{(j)}\vomega^{(j)\T}$\;
        $\vxd \leftarrow \mathrm{learn}(\mX \mOmega, \vy)$\;
    }{
      $\mQmOmega, \mRmOmega \leftarrow \mathrm{QR}(\left[ \vomega^{(1)},...,\vomega^{(i-1)} | \vec{\psi}^{(i)}, ..., \vec{\psi}^{p} \right])$\;
    $\mPmOmega \leftarrow \mbox{last $p - i$ columns of }\mQmOmega$\;
    $ \tilde{\vxd} \leftarrow \mathrm{learn}(\mX \mPmOmega, \vy)$\;
    $ \vxd \leftarrow \mPmOmega^\T \tilde{\vxd}$\;
    }
  $\vomega^{(i)} \leftarrow \vxd / ||\vxd||$\;
  }
 \caption{\label{alg:primal} \DDDD{} Algorithm - Feature Representation}
\end{algorithm}
\newpage
\section{Application Examples}\label{ss:examples}

We now present three examples of how \DDDD{} can be applied. In Section~\ref{ss:removing}, we use an image processing example to show that \DDDD{} is able to remove information about a specified target concept without interfering with other tasks. In Section~\ref{ss:improving}, we show when and how \DDDD{} can lead to improved generalization in supervised learning. In Section~\ref{ss:debiasing}, we show how \DDDD{} can be used for debiasing of word embeddings, providing state-of-the-art results. 
\subsection{Removing a Target Concept}\label{ss:removing}

\begin{figure}[!ht]
    \centering
    \includegraphics[width=1.0\linewidth]{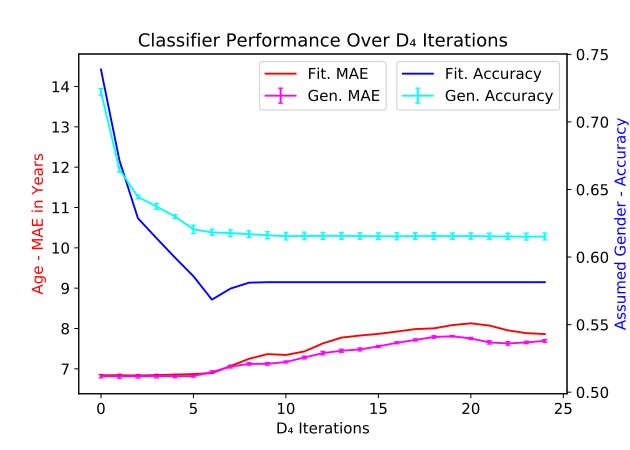}
    \caption{Performance of linear classifiers (regularized least-squares support vector machines with $\alpha=5\times 10^{3}$) over 25 iterative \DDDD{} projections onto linear decision boundary computed using IMDB gender labels. Data withheld from training the neural network model were split into: fitting ($\mathbf{fit}$, $n=22\,579$) and generalization estimation ($\mathbf{gen}, n=45\,158$), which was further split into $\mathbf{gen}_1 \dots  \mathbf{gen}_5$ where each $\mathbf{gen}_i$ is a $22\,579$ random sample from $\mathbf{gen}$ with replacement. At each iteration, a linear decision boundary using gender labels ($\mathbf{DB_G}$) is computed using $\mathbf{fit}$. Then, the data in both $\mathbf{fit}$ and $\mathbf{gen}$ are projected onto $\mathbf{DB_G}$, and classifiers are re-fitted for each pair from $\lbrace \mathbf{fit}, \mathbf{gen}_1, \dots , \mathbf{gen}_5 \rbrace \times  \lbrace \text{age}, \text{gender} \rbrace$. Age classifiers are evaluated using mean absolute error, and are shown in red and magenta for $\mathbf{fit}$ and $\mathbf{gen}$, respectively. Gender accuracy, according to the labels, is shown in blue and cyan for $\mathbf{fit}$ and $\mathbf{gen}$, respectively. Results for $\mathbf{gen}$ classifiers are reported as the mean $\pm$ standard deviation over $\mathbf{gen}_1, \dots , \mathbf{gen}_5$. Over $\mathbf{fit}$, stratified random prediction of age has a baseline error of $14.4$ years and stratified random prediction of gender has a baseline accuracy of $52\%$. \textbf{Iteration 0}: Before any \DDDD{} projections, age prediction errors for $\mathbf{fit}$ and $\mathbf{gen}$ are $6.84$ and $6.81 \pm 0.04$ years, respectively, and gender accuracy for $\mathbf{fit}$ and $\mathbf{gen}$ are $73.9\%$ and $72.8\% \pm 0.2\%$, respectively. \textbf{Iteration 5}: Gender classification accuracy sharply decreased to $56.8\%$ and $62.1\% \pm 0.3\%$ for $\mathbf{fit}$ and $\mathbf{gen}$, respectively, while age prediction performance did not change. Subsequent iterations resulted in increasing error for age prediction, and in gender accuracy convergence to $58.1\%$ and $61.4\% \pm 0.2\%$ for $\mathbf{fit}$ and $\mathbf{gen}$, respectively.}
    \label{fig:age_gender}
\end{figure}

Our first example demonstrates that \DDDD{} can remove information about one target in a neural representation space without sacrificing classification accuracy on other targets. Using the Deep Expectation of Apparent Age (DEX) method and accompanying IMDB dataset \citep{Rothe-IJCV-2016}, we trained a deep neural network to predict the age of human faces in images. The DEX method retrains a VGG16 model \citep{simonyan2014very} pre-trained on ImageNet \citep{deng2009imagenet} to predict age instead of the usual ImageNet classes. After training for $10$ epochs over $361\,246$ images, age prediction over $45\,158$ validation images not seen during training had a mean absolute error of $6.84$ years and standard deviation of $8.62$ years. Note that this result is not state-of-the-art, but is significantly better than that of a stratified random classifier (mean absolute error of $14.41$ years and standard deviation of $11.52$ years), and serves our purposes.

The IMDB dataset also includes binary gender labels. This labelling scheme reflects a simplified interpretation of gender identity that may constitute an undesirable bias. \DDDD{} can be used to iteratively decompose the neural representations of images into target concept and bias components so that separability on the bias component can be quantified and reduced.

Using the trained neural network, $22\,579$ projection fitting images and $45\,158$ generalization estimation images were transformed to points in the model's 4096-dimensional final internal representation space. As expected, linear learners can learn to predict age to a similar degree as observed while training the neural network. Perhaps less expected, the same points were also highly linearly separable on the target concept of binarized gender as it is assumed in the dataset -- see Figure \ref{fig:age_gender}, Iteration 0. Given that the representation space learned by the neural network for age prediction was also suitable for binarized gender prediction, we might be led to prematurely infer that the two concepts are necessarily dependent or entangled. \DDDD{} can be applied to challenge this inference.

After 5 iterations of \DDDD{}, the data were transformed such that separability according to the gender labels was sharply reduced without impacting predictability of age -- see Figure \ref{fig:age_gender}, Iteration 5. From this, we can infer that age prediction is much less dependent on separability of the gender labels than would be presumed without applying \DDDD{}. In other words, Figure \ref{fig:age_gender} shows a lower bound for the degree to which the target concept of age can be effectively disentangled from binary gender labels used in the IMDB dataset.

\DDDD{} is a versatile tool for target concept removal that can be applied post-hoc to data in neural representation spaces. 

\subsection{Improving Generalization}\label{ss:improving}

Our second example illustrates how the application of \DDDD{} can improve generalization error in particular settings by enforcing invariants. It has been long established that in order to achieve generalizability, predictive models must be invariant to features or concepts that are correlated with the specified target in the training set but that may be uncorrelated or even anti-correlated in other settings. As an extreme example, \cite{Ribeiro2016} constructed a synthetic setting where a classifier is trained to distinguish images of wolves from images of huskies, but where the wolves are only shown in snow and the huskies are never shown in snow. The resulting classifier is able to achieve 100\% training accuracy, but has no ability to generalize when presented with huskies in snow or wolves not in snow because its predictions are driven entirely by the presence of snow in the image.

We create a prototypical example to both illustrate this effect and demonstrate how \DDDD{} can mitigate it. 
Consider a dataset with $n = 100000$ and $p = 300$. We generate two random orthogonal directions, $\vw^*_1$ and $\vw^*_2$, in this space, which define two targets $y_1(\vx) = \varepsilon \sign \vx^\T \vw^*_1$ and $y_2(\vx) = \varepsilon \sign \vx^\T \vw^*_2$, where $P(\varepsilon = 1) = 0.9$ and $P(\varepsilon = -1) = 0.1$. We generate $n=100000$ multivariate normal feature vectors such that the correlation between $\vx^\T \vw^*_1$ and $\vx^\T \vw^*_2$ is $0.9$ and the standard deviations of $\vx^\T \vw^*_1$ and $\vx^\T \vw^*_2$ are 1 and 2, respectively, then we generate the labels $\vy_1$ and $\vy_2$. This effectively makes the signal in $\vw^*_2$ ``stronger'' than that in $\vw^*_1$ for  linear classifiers that have a prior that prefers small weights, i.e.\ that are regularized. We generate a test set that is the same in all respects except that the correlation is $-0.9$.
\begin{table*}[t]
    \caption{Performance on task defined by $\vy_1$ before and after using \DDDD{} to remove information about the task defined by $\vy_2$. ``Iteration 0'' refers to classifiers constructed using the original data. Iteration 1 shows results after one iteration of \DDDD{}. ``Loadings'' give the dot product between the learned classifier weights (normalized) and the weight vector used to define the decision boundary.}\label{tab:generalize}
    \centering
    \begin{tabular}{cccccc}
    \hline
         Iteration & Target & Train Accuracy & Test Accuracy & Weight on $w^*_1$ & Weight on $w^*_2$  \\
         \hline
         \hline
         0 & $\vy_1$ & 0.81 & 0.26 & 0.54 & 1.68 \\
           & $\vy_2$ & 0.88 & 0.87 & 0.39 & 1.83 \\
         \hline
         1 & $\vy_1$ & 0.61 & 0.82 & 0.84 &-0.69 \\
           & $\vy_2$ & 0.52 & 0.27 & 0.64 &-0.48 \\
         \hline
    \end{tabular}
\end{table*}

First, we train ridge logistic regression classifiers with $\lambda = 1$ on both $\vy_1$ and $\vy_2$. This achieves good training error for both, and good test error for $\vy_2$, but very poor test error for $\vy_1$. This is because the classifier for $\vy_1$ is mostly using $\vw^*_2$ to discriminate; in the training data both $\vw^*_1$ and $\vw^*_2$ are good for discriminating $\vy_1$, but this is not the case in the test data where the correlation has been reversed. After we apply one iteration of \DDDD{} and use the resulting data to train new classifiers, the test accuracy for $\vy_1$ jumps from $0.26$ to $0.82$, while the test accuracy for $\vy_2$ falls from $0.87$ to $0.27$. Note that the training error for $\vy_1$ actually falls from $0.81$ to $0.61$, as removal of the $\vw^*_2$ component makes fitting the regularized logistic regression more difficult, despite the improved test error. Table~\ref{tab:generalize} summarizes the results, and shows the loadings of weight vectors of each classifier onto $\vw^*_1$ and $\vw^*_2$, to illustrate the directions used by the classifiers.

\subsection{Debiasing}\label{ss:debiasing}


Our third example demonstrates how target concept removal can be applied to reduce representation bias in neural word embeddings. Recent work in debiasing word embeddings has shown that analogy tasks can reveal problematic biases in models learned from real world text \citep{Bolukbasi2016}. We illustrate the application of \DDDD{} in this space, comparing our results to two established debiasing methods: Bolukbasi et al.,'s approach, which we denote HARD-DEBIAS, and GN-GloVe, by \cite{Zhao2018}. Both approaches are successful in mitigating some bias while preserving the functional aspects of the word embeddings. However, close examination of more deeply ingrained biases by \cite{Gonen2019} (henceforth, GG) revealed that some popular debiasing methods are limited by the degree to which biases can be effectively removed. Hence, we task \DDDD{} to more deeply dis-entrench gender information from word embeddings.

\begin{figure*}[t]
    \centering
    \begin{subfigure}{.5\textwidth}
        \centering
        \includegraphics[width=1.0\linewidth]{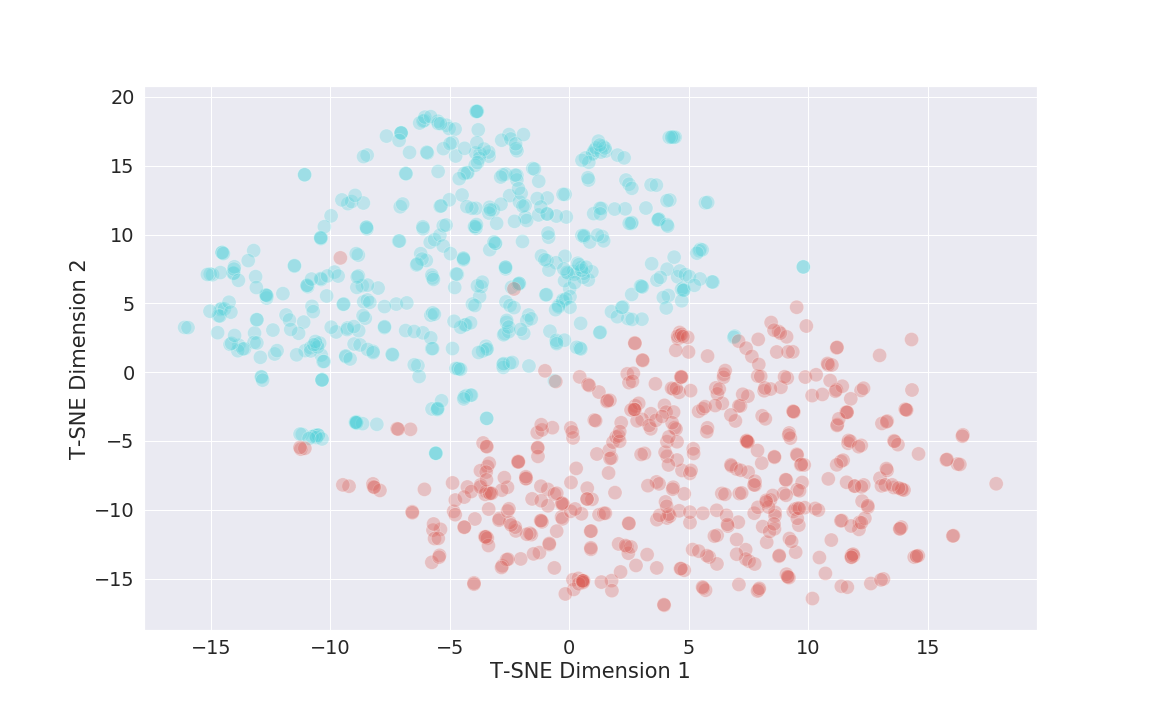}
        \caption{Before Debiasing}
        \label{fig:sub1}
    \end{subfigure}%
    \begin{subfigure}{.5\textwidth}
        \centering
        \includegraphics[width=1.0\linewidth]{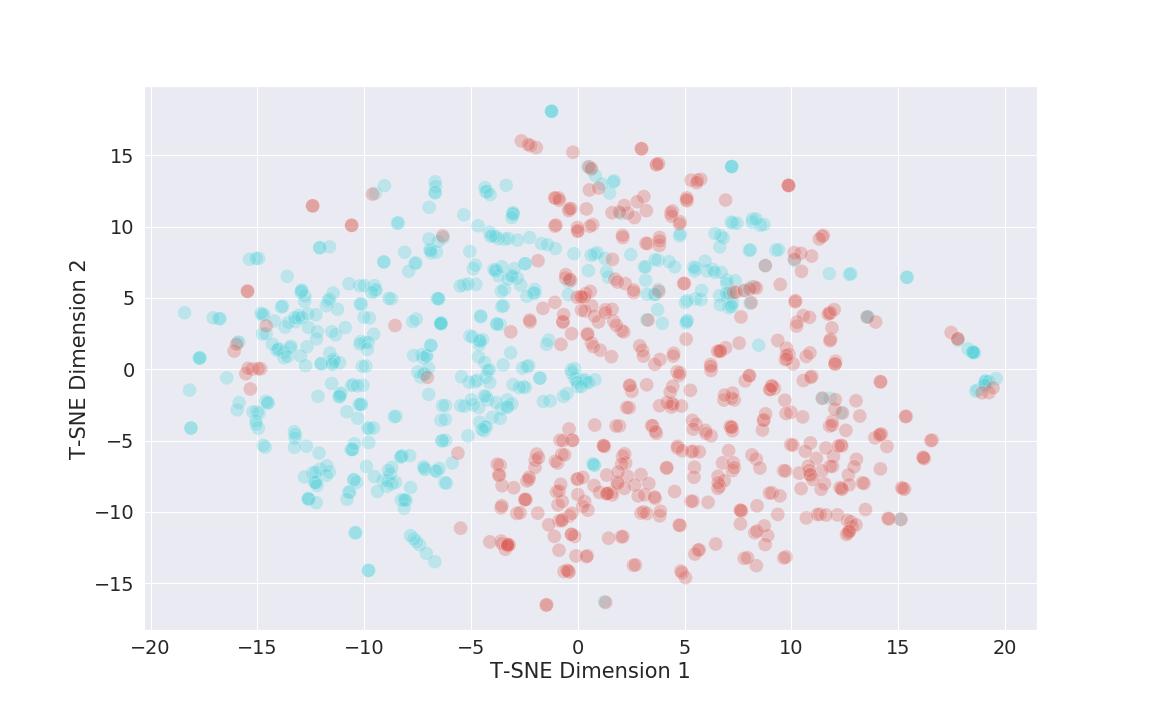}
        \caption{After Debiasing with 6 \DDDD{} projections}
        \label{fig:sub2}
        
    \end{subfigure}
    \caption{T-SNE \citep{VanDerMaaten2008} (\emph{perplexity}=40, \emph{n\_iterations}=300) representation of gendered word vectors before (a) and after 6 iterations of \DDDD{} (b). Points are coloured by gender association: feminine word vectors are red, masculine word vectors are blue. We observe migration of some vectors across previous `class divisions', suggesting that \DDDD{} is helping to remedy the bias by neighbour phenomenon.}
    \label{fig:6projections}
\end{figure*}

Our approach builds on HARD-DEBIAS's use of pre-selected {\it instances}, where an instance is a pair of words and their corresponding embedded representations. Each selected pair, for example \textit{her, his}, or \textit{she, he}, defines a direction in the representation space. HARD-DEBIAS takes these directions, summarizes them using PCA to find a single direction, and then projects the representations of all non-gendered words onto its orthogonal complement. In contrast, we apply \DDDD{} to find decision directions, rather than principal components, that separate vectors of masculine- and feminine-labeled words, using the list of masculine and feminine words identified by \cite{Zhao2018}. We then project all words in the embedding orthogonal to these directions. We tested \DDDD{} on both the original Google News embedding using word2vec \citep{Mikolov2013} and on a smaller benchmarking variant (w2vnews) used in related debiasing experiments. In preliminary experiments, we found that 6 iterations of \DDDD{} on the w2vnews dataset led to convergent CV accuracy; hence the choice of 6 iterations for all experiments. 



\subsubsection{Bias By Neighbour} GG observed that the most extreme words at each end of the gender direction (which they consider to be the difference between \textit{she} and \textit{he}) cluster well using standard $k$-means ($k=2$), and that this clustering persists after applying several existing debiasing techniques. To quantify this effect, GG uses cluster-based classification accuracy, calling this quantity `bias by neighbour'. We reproduced the gender direction vector using the w2vnews embedding set provided with HARD-DEBIAS for comparison with \DDDD{}. We then used $k$-means ($k=2$) to cluster the 500 most biased words from each extreme of the gender direction into two clusters. This method matches  gender labels to clusters with 99.8\% accuracy on w2vnews and  with 99.98\% accuracy on Google News. GG reported clustering that matches gender labelling in 92.5\% of cases after HARD-DEBIAS and 85.6\% of cases in GN-GloVe. Using \DDDD{}, projections 2, 4, and 6 achieve reductions of bias by neighbour to 95.9\%, 87.4\%, and 74.3\%, respectively. These results are visualized by Figure \ref{fig:6projections}.



We repeated the experiment using the full Google News embedding to test on a larger set. Using 2, 4, and 6 \DDDD{} iterations yielded accuracies of 71.6\%, 67.4\%, and 68.2\%, respectively, on associating clusters with gender labels. Accuracy did not converge after 6 projections in this embedding, suggesting that additional iterations could more thoroughly remove bias from this larger set of word vectors. 

\subsubsection{Debiasing Professions} Exploring a different manifestation of bias, GG observed that word embeddings of masculine-biased professions cluster well together after debiasing with HARD-DEBIAS and GN-GloVe. Conversely, feminine-biased professions such as \textit{nurse} do not have as many masculine neighbors. We took the list of all profession terms that have a positive dot product with the gender direction and labelled them as `masculine-biased professions'. From this set, we then computed the 100 nearest neighbours for all words in the `profession' data provided in HARD-DEBIAS, and counted the number of such masculine-biased professions in the 100 nearest points in vector space. Changes in masculine nearest neighbour count after 6 iterations of \DDDD{} are visualized by Figure \ref{fig:nearestneighboursprofessions}. 

\begin{figure*}[t]
    \centering
    \begin{subfigure}{.5\textwidth}
        \centering
        \includegraphics[width=1.0\linewidth]{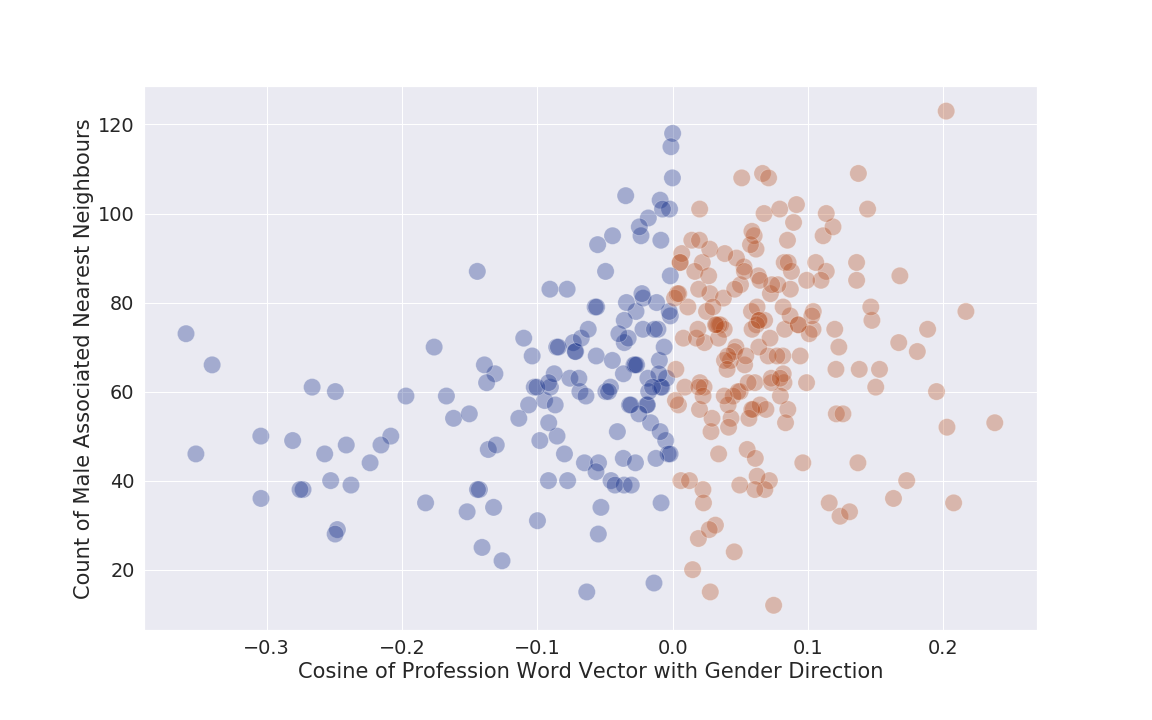}
        \caption{Nearest Neighbour Count Before Debiasing}
        \label{fig:sub3}
    \end{subfigure}%
    \begin{subfigure}{.5\textwidth}
        \centering
        \includegraphics[width=1.0\linewidth]{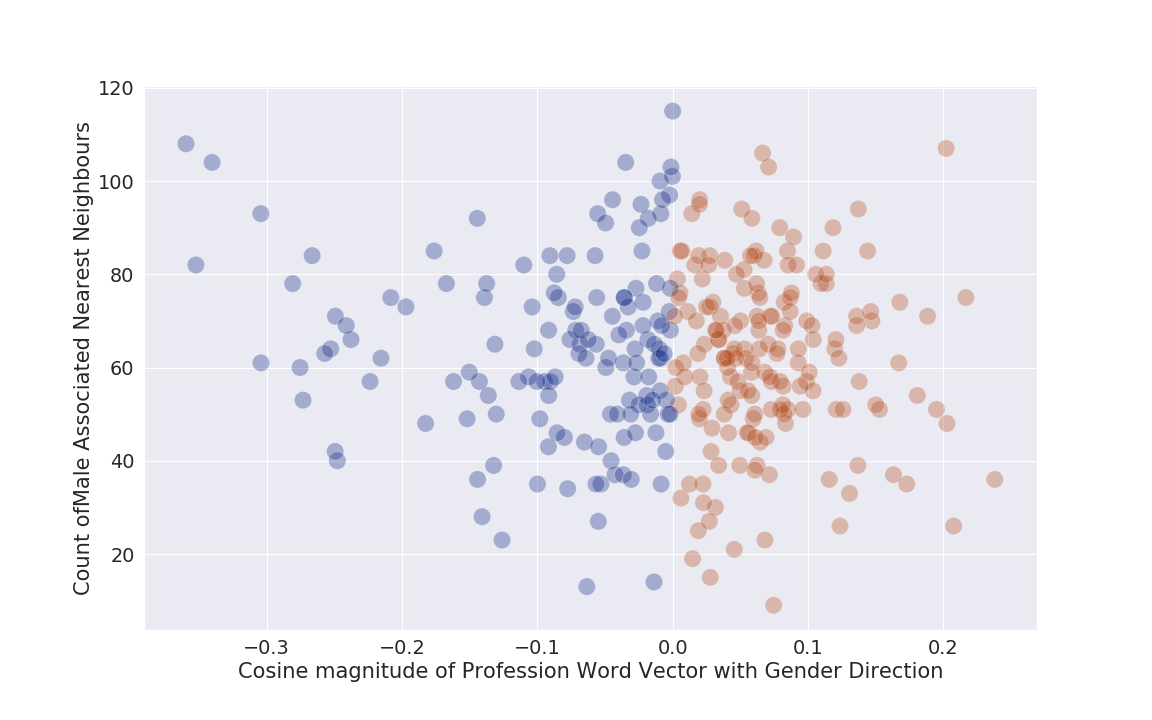}
        \caption{Nearest Neighbour Count After Debiasing with 6 \DDDD{} projections}
        \label{fig:sub4}
    \end{subfigure}
    \caption{For each profession word vector with a gender association, we measure it's dot product with the embedding space's gender direction, and count the number of nearest neighbours in vector space that are masculine oriented professions. Points are coloured by gender association: blue dots correspond to feminine word vectors and red correspond to masculine word vectors. Original counts of nearest neighbours are seen in (a) and after 6 iterations of \DDDD{} in (b). While much of the structure near the `0' of the gender direction is maintained, the most feminine oriented vectors have increased numbers of masculine oriented vectors, suggesting progress towards debiasing.}
    \label{fig:nearestneighboursprofessions}
\end{figure*}

\subsubsection{Recoverability} 
An important aspect of evaluating any debiasing method is to test whether bias can be recovered using other algorithms. To this end and in-line with GG, we estimated bias recoverability using more powerful \textit{non-linear models}, namely radial basis function (RBF) kernel SVMs implemented in scikit-learn with default parameters trained to separate the masculine- and feminine associated word vectors in w2vnews. The resulting accuracy is 59.1\%; the same score we found a linear SVM to converge to when applying \DDDD{}. Running the experiment on the full Google News embedding results in 51.5\% accuracy.  This shows improvement over the scores of 88.88\% from HARD-DEBIAS and 96.53\% from GN-GloVe observed by GG.


\begin{figure*}[t]
    \centering
    \includegraphics[width=460px]{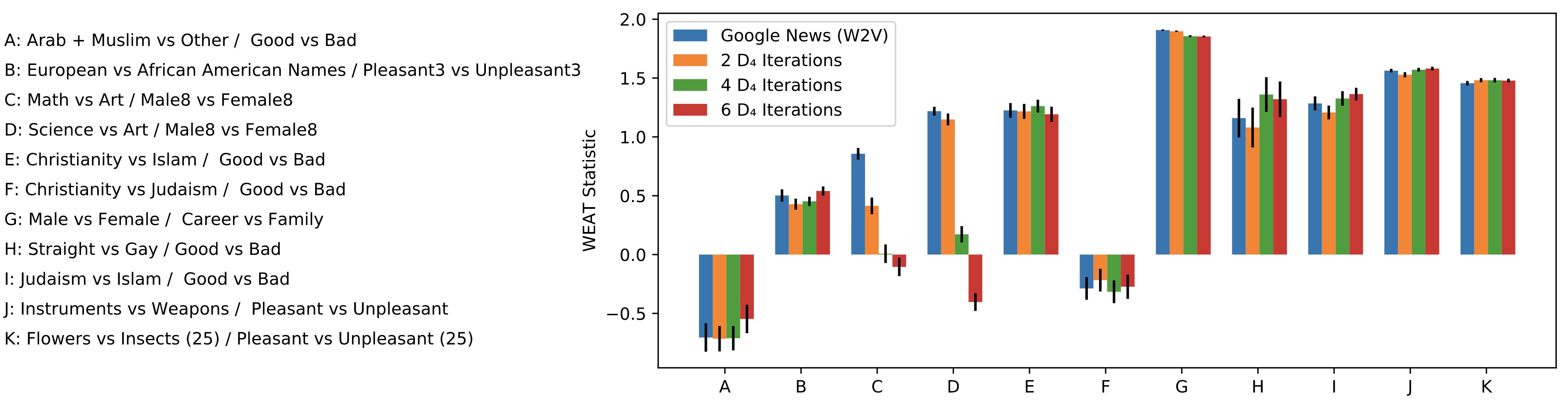}
    \caption{Estimation of bias using the Word Embedding Association Test developed by \citep{Caliskan2017}. Values near zero indicate no association. Estimations were made on the original Google News word embedding, and 2, 4 and 6 \DDDD{} iterations. We observe \DDDD{}'s ability to remove targeted concepts without altering other associations in the embedding.}
    \label{fig:WEAT}
\end{figure*}

\noindent
\newpage
\subsubsection{Word Embedding Associations Test} Last, we used the word embedding association test (WEAT) \cite{Caliskan2017} with the implementation available at \url{https://github.com/hljames/compare-embedding-bias} to further evaluate debiasing with \DDDD{} on the full Google News embedding. We did not use the w2vnews embedding as it was missing many vectors for words in the test. Results are visualized by Figure \ref{fig:WEAT}. We see that some of the debiasing generalizes to appropriate categories in the test, and in some cases results in changes from positive to negative associations. Debiasing with \DDDD{} did not modify the associations in Career vs.\ Family to the same extent as other gendered categories. This suggests that further debiasing could benefit from an expanded set of curated examples to compare against than the ones we used here. Multiple curated lists could be used to sequentially debias embeddings with \DDDD{} or related methods. 

We have demonstrated here that \DDDD{} has potent debiasing capabilities and can be applied post-hoc to data in word embedding vector spaces.

\section{Relationship to Other Methodology}\label{ss:related}

We now discuss how \DDDD{} relates to three areas of methodology: adversarial training, PCA, and kernel methods. 

\subsubsection{Adversarial Training} Adversarial training is used to describe neural network training that seeks to construct a feature representation that is able to learn a target concept but is also \textit{unable} to learn a distractor concept \citep{Goodfellow}. This is undertaken during learning of the representation, where the two types of training are interleaved. The goal is similar to that of \DDDD{} but there are important differences: \DDDD{} creates a representation space orthogonal to the decision direction learned from the specified target. The resulting new representation should perform poorly on the specified target, but there is no specification of a particular learning task that the representation should perform well on. Also, adversarial training can create different nonlinear representations depending on network architecture, whereas \DDDD{} operates only in the linear feature space. (Though it is possible to implement \DDDD{} using kernels.) Finally, \DDDD{} can be applied as a post-hoc step that is computationally inexpensive relative to training or re-training a large neural network.

\subsubsection{Principal Components Analysis} Although both PCA and \DDDD{} both involve projecting the feature matrix onto a linear subspace, \DDDD{} projects the \textit{rows} of $\mX$ onto a linear subspace whereas PCA and its variants project the \textit{columns} of the data matrix onto a lower-dimensional space with an orthogonal representation \citep{jolliffe2016principal}. Iterative methods for PCA, such as Schur-complement deflation, sequentially identify directions onto which the columns of $\mX$ are projected. Considering what happens if instead of projecting the rows of $\mX$ onto the space orthogonal to $\vomega$ as $\DDDD{}$ does, we project the columns of $\mX$ onto the space orthogonal to $\mX\vomega$, we have the following theorem:
\begin{theorem}\label{thm:schur}
For full-rank $\mX$, Schur-complement deflation of its columns onto the orthogonal complement of $\mX\vomega$ and decision-direction deflation onto the orthogonal complement of $\vomega$ are equivalent iff the columns of $\mX$ are orthonormal.
\end{theorem}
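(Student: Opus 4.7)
The plan is to evaluate both deflations in closed form, subtract them, and read the equivalence off the resulting rank-one factorization of the difference. For a unit vector $\vomega$, the decision-direction deflation is $\mX_{\text{DD}} = \mX - \mX\vomega\vomega^\T$, while Schur-complement deflation of the columns of $\mX$ onto the orthogonal complement of $\mX\vomega$ is the projection applied from the left,
\[
\mX_{\text{SC}} \;=\; \left(\mI - \frac{\mX\vomega\vomega^\T\mX^\T}{\vomega^\T\mX^\T\mX\vomega}\right)\mX \;=\; \mX - \frac{\mX\vomega\vomega^\T\mX^\T\mX}{\vomega^\T\mX^\T\mX\vomega}.
\]
Subtracting produces the clean factorization
\[
\mX_{\text{SC}} - \mX_{\text{DD}} \;=\; \mX\vomega\left[\vomega^\T - \frac{\vomega^\T\mX^\T\mX}{\vomega^\T\mX^\T\mX\vomega}\right],
\]
so the theorem reduces to determining when the bracketed $1\times p$ row vector is zero. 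I read ``equivalent'' as requiring the two operators to coincide for every choice of unit $\vomega$, since the conclusion is a property of $\mX$ alone.

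For the ``if'' direction I substitute $\mX^\T\mX = \mI$; the bracket collapses to $\vomega^\T - \vomega^\T/(\vomega^\T\vomega) = \vzero^\T$ for unit $\vomega$, and the two deflations coincide.

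For the ``only if'' direction, assume the deflations agree for every unit $\vomega$. Full column rank of $\mX$ gives $\mX\vomega\ne\vzero$ whenever $\vomega\ne\vzero$, so the bracketed row vector itself must vanish; transposing yields $\mX^\T\mX\,\vomega = (\vomega^\T\mX^\T\mX\vomega)\,\vomega$ for every unit $\vomega$. Hence every unit direction is an eigenvector of the symmetric positive-definite matrix $\mX^\T\mX$, which forces $\mX^\T\mX = \lambda\mI$; this is precisely the statement that the columns of $\mX$ are mutually orthogonal of common length (orthonormal up to the overall scale, which $\lambda=1$ pins down in the strict reading of the theorem).

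The only real obstacle I anticipate is the small supporting lemma ``every unit vector is an eigenvector of a symmetric matrix $\Rightarrow$ that matrix is scalar.'' A self-contained argument applies the eigenvector condition to two linearly independent unit directions $\vomega_1,\vomega_2$ and to their normalized sum, then uses linearity of $\mX^\T\mX$ to force the two eigenvalues to agree; iterating over a basis gives $\mX^\T\mX=\lambda\mI$. Everything else is routine matrix manipulation, and the rank-one factorization above is what makes the iff split cleanly into two one-line checks.
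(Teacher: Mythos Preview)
Your argument is correct and, in the converse direction, cleaner than the paper's. The paper rewrites the Schur deflation as $\mX\bigl(\mI - \vomega\vomega^\T\,\mX^\T\mX / (\vomega^\T\mX^\T\mX\vomega)\bigr)$ and then, for the ``only if'' part, asserts that $\vomega\vomega^\T\,\mX^\T\mX/(\vomega^\T\mX^\T\mX\vomega) = \vomega\vomega^\T$ forces $\mX^\T\mX/(\vomega^\T\mX^\T\mX\vomega)$ to be idempotent, whence $\mX^\T\mX=\mI$. Your route---factor the difference as the rank-one matrix $\mX\vomega\bigl[\vomega^\T - \vomega^\T\mX^\T\mX/(\vomega^\T\mX^\T\mX\vomega)\bigr]$, use full column rank to kill the left factor, and invoke the standard ``every direction is an eigenvector $\Rightarrow$ scalar matrix'' lemma---is more transparent and makes the quantification over all unit $\vomega$ explicit. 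It also surfaces the point you flag at the end: the argument really yields $\mX^\T\mX=\lambda\mI$, i.e.\ orthogonal columns of common norm, with $\lambda=1$ needed only under the strict reading of ``orthonormal.''
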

\begin{proof}
$\left(\mI - \frac{\mX\vomega\vomega^\T\mX^\T}{\vomega^\T\mX^\T\mX\vomega} \right) \mX = \mX \left(\mI - \vomega\vomega^\T \frac{\mX^\T\mX}{\vomega^\T\mX^\T\mX\vomega}\right).$
If $\mX$ has orthonormal columns, then $\mX^\T\mX = \mI$ and the r.h.s.\ simplifies to projecting the rows of $\mX$ onto the orthogonal complement of $\vomega$. Conversely, if the Schur-complement deflation is equivalent to projection onto the orthogonal complement of $\vomega$, then $\vomega\vomega^\T \frac{\mX^\T\mX}{\vomega^\T\mX^\T\mX\vomega} = \vomega\vomega^\T$. This implies that $\frac{\mX^\T\mX}{\vomega^\T\mX^\T\mX\vomega}$ is idempotent. If $\mX$ has full rank, then $\mX^\T\mX = \mI$ since $\mI$ is the only full-rank idempotent matrix.
\end{proof}
The interesting implication of Theorem~\ref{thm:schur} is that if the features in $\mX$ are \textit{not} orthogonal, then the two projections (rows versus columns) give different results. In particular, this means that projecting the columns of $\mX$ onto the orthogonal complement of $\mX\vomega$ will not in general remove all variability in the direction of $\vomega$, regardless of how $\vomega$ is found.

\subsection{Kernel Methods}

While we envision \DDDD{} to be primarily applicable to explicit feature spaces learned e.g.\ by neural networks, we can perform the same projection steps in the feature space induced by a kernel. In this setting, rather than transforming the original data matrix $\mX$, we transform a kernel matrix $\mK$ after every step to remove variability along chosen directions in the implicit high-dimensional feature space. We note that this is possible without having an explicit representation of the direction, by the representer theorem \citep{shawe-taylor_cristianini_2004}.

\section{Conclusion}\label{ss:conclusion}

We have described a new algorithm, Decision-Directed Data Decomposition, for removing information from a dataset. It is simple and scalable, and can be used for removing a target concept, for improving generalization, and for debiasing. We have shown in particular that it produces state-of-the-art results in word embedding debiasing. As a post-hoc method for debiasing in high-dimensional representations of data, \DDDD{} improves accessibility for users of pre-trained natural language processing and computer vision models who want to address bias in neural representations but may not have access to sufficient computational resources for end-to-end model fitting.

For future method development, we are interested in two approaches for mitigating loss of information by \DDDD{}. The first is to apply the projections in an expanded feature space, either implicitly using the kernel trick together with techniques like SMO \citep{platt1998sequential}, or explicitly using for example random Fourier features \citep{rahimi2008random}. For future applications, \DDDD{} could be used to manipulate contextual word embedding spaces, which increasingly involve neural networks and datasets that are too large for many end users to train directly. These include BERT \citep{devlin-etal-2019-bert} and MEGATRON \citep{shoeybi2019megatronlm}. Future work will focus on the challenges of debiasing morphing word representations, which change depending on context. \cite{karve-etal-2019-conceptor} have laid exciting groundwork in addressing these challenges.


\bibliography{arxiv.bib}

\end{document}